\DeclareMathOperator{\tr}{tr}
\theoremstyle{plain}
\newtheorem{theorem}{Theorem}
\newtheorem{prop}{Proposition}
\begin{document}

\title[Self-Calibration of Cameras with Euclidean Image Plane]{Self-Calibration of Cameras with Euclidean Image Plane in Case of Two Views and Known Relative Rotation Angle}

\author{E.V. Martyushev}

\thanks{The work was supported by Act 211 Government of the Russian Federation, contract No.~02.A03.21.0011.}

\keywords{Multiview geometry, Self-calibration, Essential matrix, Euclidean image plane, Relative rotation angle, Gr\"{o}bner basis}

\address{South Ural State University, 76 Lenin avenue, 454080 Chelyabinsk, Russia}
\email{mev@susu.ac.ru}

\begin{abstract}
The internal calibration of a pinhole camera is given by five parameters that are combined into an upper-triangular $3\times 3$ calibration matrix. If the skew parameter is zero and the aspect ratio is equal to one, then the camera is said to have Euclidean image plane. In this paper, we propose a non-iterative self-calibration algorithm for a camera with Euclidean image plane in case the remaining three internal parameters --- the focal length and the principal point coordinates --- are fixed but unknown. The algorithm requires a set of $N \geq 7$ point correspondences in two views and also the measured relative rotation angle between the views. We show that the problem generically has six solutions (including complex ones).

The algorithm has been implemented and tested both on synthetic data and on publicly available real dataset. The experiments demonstrate that the method is correct, numerically stable and robust.

\end{abstract}

\maketitle

\section{Introduction}

The problem of camera calibration is an essential part of numerous computer vision applications including 3d reconstruction, visual odometry, medical imaging, etc. At present, a number of calibration algorithms and techniques have been developed. Some of them require to observe a planar pattern viewed at several different positions~\cite{Faugeras93,Heikkila,Zhang}. Other methods use 3d calibration objects consisting of two or three pairwise orthogonal planes, which geometry is known with good accuracy~\cite{Tsai}. Also, there are calibration algorithms assuming that a scene involves the pairs of mutually orthogonal directions~\cite{CP90,LZ99}. In contrast with the just mentioned methods, \emph{self-calibration} does not require any special calibration objects or scene restrictions~\cite{Hartley92,MF,QT,Triggs}, so only image feature correspondences in several uncalibrated views are required. This provides the self-calibration approach with a great flexibility and makes it indispensable in some real-time applications.

In two views, camera calibration is given by ten parameters --- five internal and five external, whereas the fundamental matrix describing the epipolar geometry in two views has only seven degrees of freedom~\cite{Hartley92}. This means that self-calibration in two views is only possible under at least three further assumptions on the calibration parameters. For example, we can assume that the skew parameter is zero and so is the translation vector, i.e. the motion is a pure rotation. Then, the three orientation angles and the remaining four internals can be self-calibrated from at least seven point matches~\cite{Hartley}. Another possibility is that all the internal parameters except common focal length are known. Then, there exist a minimal self-calibration solution operating with six matched points~\cite{SNKS,BJA,KBP,Li}.

The five internal calibration parameters have different interpretation. The skew parameter and the aspect ratio describe the pixel's shape. In most situations, e.g. under zooming, these internals do not change. Moreover, for modern cameras the pixel's shape is very close to a square and hence the skew and aspect ratio can be assumed as given and equal to~$0$ and~$1$ respectively. Following~\cite{HA}, we say that a camera in this case has \emph{Euclidean image plane}. On the other hand, the focal length and the principal point coordinates describe the relative placement of the camera center and the image plane. The focal length is the distance between the camera center and the image plane, whereas the principal point is an orthographic projection of the center onto the plane. All these internals should be considered as unknown, since even for modern cameras the principal point can be relatively far from the geometric image center. Besides, it is well known that the focal length and the principal point always vary together by zooming~\cite{Sturm}.

The aim of this paper is to propose an efficient solution to the self-calibration problem of a camera with Euclidean image plane. As it was mentioned above, in two views at most seven calibration parameters can be self-calibrated. Since a camera with Euclidean image plane has eight parameters, we conclude that one additional assumption should be made. In this paper we reduce the number of external parameters assuming that the \emph{relative rotation angle} between the views is known. The problem thus becomes minimally constrained from seven point correspondences in two views. In practice, the relative rotation angle can be reliably found from e.g. the readings of an inertial measurement unit (IMU) sensor. The possibility of using such additional data in structure-from-motion has been demonstrated in~\cite{LHLP}.

In general, the joint usage of a camera and an IMU requires \emph{external calibration} between the devices, i.e. we have to know the transformation matrix between their coordinate frames. However, if only relative rotation angle is used, then the external calibration is unnecessary, provided that both devices are fixed on some rigid platform. Thus, the rotation angle of the IMU can be directly used as the rotation angle of the camera~\cite{LHLP}. This fact makes the presented self-calibration method more convenient and flexible for practical use.

To summarize, we propose a new non-iterative solution to the self-calibration problem in case of at least seven matched points in two views, provided the following assumptions:
\begin{itemize}
\item the camera intrinsic parameters are the same for both views;
\item the camera has Euclidean image plane;
\item the relative rotation angle between the views is known.
\end{itemize}

Our self-calibration method is based on the ten quartic equations. Nine of them are well-known and follow from the famous cubic constraint on the essential matrix. The novel last one (see Eq.~\eqref{eq:constrFw2}) arises from the condition that the relative angle is known.

Finally, throughout the paper it is assumed that the cameras and scene points are in \emph{sufficiently general position}. There are critical camera motions for which self-calibration is impossible, unless some further assumptions on the internal parameters or the motion are made~\cite{Sturm2}. Also there exist degenerate configurations of scene points. However, in this paper we restrict ourselves to the generic case of camera motions and points configurations.

The rest of the paper is organized as follows. In Section~\ref{sec:prel}, we recall some definitions and results from multiview geometry and deduce our self-calibration constraints. In Section~\ref{sec:algorithm}, we describe in detail the algorithm. In Section~\ref{sec:synth} and Section~\ref{sec:real}, the algorithm is validated in a series of experiments on synthetic and real data. In Section~\ref{sec:disc}, we discuss the results and make conclusions.

\section{Preliminaries}
\label{sec:prel}

\subsection{Notation}

We preferably use $\alpha, \beta, \ldots$ for scalars, $a, b, \ldots$ for column 3-vectors or polynomials, and $A, B, \ldots$ both for matrices and column 4-vectors. For a matrix $A$ the entries are~$(A)_{ij}$, the transpose is~$A^{\mathrm T}$, the determinant is $\det A$, and the trace is~$\tr A$. For two 3-vectors~$a$ and~$b$ the cross product is $a\times b$. For a vector~$a$ the notation~$[a]_\times$ stands for the skew-symmetric matrix such that $[a]_\times b = a \times b$ for any vector~$b$. We use~$I$ for the identity matrix.

\subsection{Fundamental and essential matrices}

Let there be given two cameras $P = \begin{bmatrix} I & 0 \end{bmatrix}$ and $P' = \begin{bmatrix} A & a \end{bmatrix}$, where~$A$ is a $3\times 3$ matrix and~$a$ is a 3-vector. Let~$Q$ be a $4$-vector representing homogeneous coordinates of a point in 3-space, $q$ and~$q'$ be its images, that is
\begin{equation}
q \sim P Q, \quad q' \sim P' Q,
\end{equation}
where $\sim$ means an equality up to non-zero scale. The \emph{coplanarity constraint} for a pair $(q, q')$ says
\begin{equation}
\label{eq:inci2}
q'^{\mathrm T} F q = 0,
\end{equation}
where matrix $F = [a]_\times A$ is called the \emph{fundamental matrix}.

It follows from the definition of matrix~$F$ that $\det F = 0$. This condition is also sufficient. Thus we have

\begin{theorem}[\cite{HZ}]
\label{thm:fund}
A non-zero $3\times 3$ matrix~$F$ is a fundamental matrix if and only if
\begin{equation}
\det F = 0.
\end{equation}
\end{theorem}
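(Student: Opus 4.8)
The plan is to prove the two implications separately. The forward direction (necessity) is immediate and is exactly the observation already recorded above: if $F = [a]_\times A$, then $\det F = \det[a]_\times \cdot \det A$, and since $[a]_\times$ is a skew-symmetric matrix of odd order~$3$ it is singular, $\det[a]_\times = 0$, whence $\det F = 0$. So the whole content of the theorem lies in the converse, where I must start from a non-zero $F$ with $\det F = 0$ and exhibit a $3$-vector~$a$ and a $3\times 3$ matrix~$A$ realizing $F = [a]_\times A$.

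For the converse I would first extract the epipole. Since $\det F = 0$ and $F\neq 0$, the transpose $F^{\mathrm T}$ has a non-trivial kernel, so I may choose a non-zero vector~$a$ with $a^{\mathrm T} F = 0$. This choice is in fact forced: any factorization $F=[a]_\times A$ satisfies $a^{\mathrm T} F = a^{\mathrm T}[a]_\times A = 0$, because $a^{\mathrm T}[a]_\times = 0$ for a skew-symmetric $[a]_\times$. The key geometric fact is that the column space of $[a]_\times$ is precisely the plane of vectors orthogonal to~$a$ (since $a\times b$ is always orthogonal to~$a$), while the relation $a^{\mathrm T} F = 0$ says exactly that every column of~$F$ lies in that same plane. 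Hence each column of~$F$ lies in the image of $[a]_\times$, and the linear system $[a]_\times A = F$ is solvable for~$A$.

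To make the solution explicit I would invoke the identity $[a]_\times^2 = a a^{\mathrm T} - \|a\|^2 I$ and set
\begin{equation}
A = -\frac{1}{\|a\|^2}[a]_\times F.
\end{equation}
Using $a^{\mathrm T} F = 0$ one then computes
\begin{equation}
[a]_\times A = -\frac{1}{\|a\|^2}[a]_\times^2 F = -\frac{1}{\|a\|^2}(a a^{\mathrm T} - \|a\|^2 I)F = F,
\end{equation}
the term $a(a^{\mathrm T} F)$ vanishing. This produces the required factorization and completes the converse.

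I expect the only real obstacle to be the solvability step, namely guaranteeing that $F$ genuinely lies in the image of the map $A \mapsto [a]_\times A$. Everything hinges on choosing~$a$ to be a left null vector of~$F$, which is available precisely because $\det F = 0$; the rank hypothesis is what couples the column space of~$F$ to the two-dimensional image of $[a]_\times$. Once that alignment is secured, the closed form for~$A$ above is a routine verification via the square-of-cross-product identity, and no separate case analysis on whether the rank of~$F$ equals $1$ or $2$ is needed.
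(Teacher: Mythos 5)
Your proof is correct and complete. The paper itself offers no proof of this statement---it simply remarks that the necessity follows from the definition and that the condition is also sufficient, citing [HZ]---and your argument is exactly the standard one that reference uses: necessity from $\det[a]_\times = 0$ for a $3\times 3$ skew-symmetric matrix, and sufficiency by taking $a$ to be a left null vector of $F$ and exhibiting $A$ explicitly via the identity $[a]_\times^2 = aa^{\mathrm T} - \|a\|^2 I$ (your $A = -[a]_\times F/\|a\|^2$ is, up to scale, the canonical choice $A = [e']_\times F$). The only caveat worth noting is inherited from the statement itself rather than from your argument: under the paper's definition, when $F$ has rank one your construction yields a $P' = \begin{bmatrix} A & a\end{bmatrix}$ of rank two, so one implicitly accepts degenerate ``cameras'' in that case, which is the usual convention here.
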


The \emph{essential matrix}~$E$ is the fundamental matrix for \emph{calibrated cameras} $\hat P = \begin{bmatrix} I & 0 \end{bmatrix}$ and $\hat P' = \begin{bmatrix} R & t\end{bmatrix}$, where $R \in \mathrm{SO}(3)$ is called the \emph{rotation matrix} and~$t$ is called the \emph{translation vector}. Hence, $E = [t]_\times R$. Matrices~$F$ and~$E$ are related by
\begin{equation}
E \sim K'^{\mathrm T} F K,
\end{equation}
where $K$ and $K'$ are the upper-triangular \emph{calibration matrices} of the first and second camera respectively.

The fundamental matrix has seven degrees of freedom, whereas the essential matrix has only five degrees of freedom. It is translated into extra constraints on the essential matrix. The following theorem gives one possible form of such constraints.

\begin{theorem}[\cite{Maybank}]
\label{thm:essential}
A $3\times 3$ matrix~$E$ of rank two is an essential matrix if and only if
\begin{equation}
\label{eq:constrE}
\frac{1}{2}\,\tr(EE^{\mathrm T})E - EE^{\mathrm T}E = 0_{3\times 3}.
\end{equation}
\end{theorem}

\subsection{Self-calibration constraints}

Let $\theta$ be the angle of rotation between two calibrated camera frames. In case~$\theta$ is known, the trace of rotation matrix $\tau = 2\cos\theta + 1$ is known too. This leads to an additional quadratic constraint on the essential matrix.

\begin{prop}
\label{prop:constrEtau}
Let $E = [t]_\times R$ be a real non-zero essential matrix and $\tr R = \tau$. Then~$E$ satisfies the equation
\begin{equation}
\label{eq:constrEtau}
\frac{1}{2}\,(\tau^2 - 1)\tr(EE^{\mathrm T}) + (\tau + 1)\tr(E^2) - \tau\tr^2 E = 0.
\end{equation}
\end{prop}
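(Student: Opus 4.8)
The plan is to reduce to a convenient orthonormal frame and then to compute the three traces appearing in \eqref{eq:constrEtau} explicitly. The key observation is that each of $\tr E$, $\tr(E^2)$ and $\tr(EE^{\mathrm T})$ is invariant under the substitution $E \mapsto OEO^{\mathrm T}$ with $O \in \mathrm{SO}(3)$: the first two because they are conjugation invariants, the last because $O$ is orthogonal. Since $E = [t]_\times R$ becomes $O[t]_\times O^{\mathrm T}\, ORO^{\mathrm T} = [Ot]_\times (ORO^{\mathrm T})$, and $ORO^{\mathrm T}$ is again a rotation through the same angle $\theta$ but about the rotated axis $On$, I may assume without loss of generality that the rotation axis is $e_3$. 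Writing $c = \cos\theta$ and $s = \sin\theta$ (so that $\tau = 1 + 2c$), the matrix $R$ then takes the standard block form with the $2\times 2$ rotation block in the upper-left corner and a $1$ in the lower-right corner, and $E = [t]_\times R$ can be written out entry by entry in terms of $t = (t_1,t_2,t_3)^{\mathrm T}$, $c$ and $s$.

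First I would record the two easy traces. A direct summation of the squared entries gives $\tr(EE^{\mathrm T}) = 2\lVert t\rVert^2$ (equivalently, from $EE^{\mathrm T} = -[t]_\times^2$ together with $[t]_\times^2 = tt^{\mathrm T} - \lVert t\rVert^2 I$), while reading off the diagonal gives $\tr E = -2 s\, t_3$, where $t_3 = n^{\mathrm T} t$ is the component of $t$ along the axis. Hence $\tr^2 E = 4 s^2 t_3^2$.

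The main computational step is $\tr(E^2) = \sum_{i,j} E_{ij}E_{ji}$. Carrying out this sum in the adapted frame, the dependence on $t_1, t_2$ enters only through $t_1^2 + t_2^2 = \lVert t\rVert^2 - t_3^2$, and after using $s^2 = 1 - c^2$ one obtains a clean closed form, namely $\tr(E^2) = 2 t_3^2(2c+1)(1-c) - 2c\lVert t\rVert^2$. This is the step I expect to be the only real obstacle, since the off-diagonal cross terms must be combined carefully; the virtue of the $e_3$-axis frame is precisely that these cross terms collapse.

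Finally I would substitute $c = (\tau - 1)/2$, so that $\tau + 1 = 2(1+c)$, $\tau^2 - 1 = 4c(1+c)$ and $2c + 1 = \tau$, into the left-hand side of \eqref{eq:constrEtau}. The terms proportional to $\lVert t\rVert^2$ cancel between $\tfrac12(\tau^2-1)\tr(EE^{\mathrm T})$ and the $-2c\lVert t\rVert^2$ part of $(\tau+1)\tr(E^2)$, while the terms proportional to $t_3^2$ cancel between the remaining part of $(\tau+1)\tr(E^2)$ and $-\tau\tr^2 E$ (both equal $\pm 4 t_3^2(1+c)(2c+1)(1-c)$, using $s^2 = (1-c)(1+c)$). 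The left-hand side is therefore identically zero, which proves the proposition. Throughout, $E \neq 0$ is used only to guarantee $t \neq 0$; the identity itself holds for every real $E$ of the form $[t]_\times R$.
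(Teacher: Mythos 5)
Your proof is correct, and every intermediate formula checks out: $\tr(EE^{\mathrm T})=2\lVert t\rVert^2$, $\tr E=-2st_3$, and $\tr(E^2)=2t_3^2(s^2-c^2+c)-2c\lVert t\rVert^2=2t_3^2(2c+1)(1-c)-2c\lVert t\rVert^2$, after which the substitution $\tau=2c+1$ makes the left-hand side collapse to $4(2c+1)t_3^2\bigl(1-c^2-s^2\bigr)=0$. The overall strategy --- conjugate by a rotation to an adapted frame, then verify the identity by direct computation --- is the same as the paper's, but you choose the opposite normalization: the paper rotates so that the \emph{translation} $t$ becomes $e_3$ (making $[t]_\times$ trivial and leaving $R$ general, parametrized by a unit quaternion $s+u\mathbf{i}+v\mathbf{j}+w\mathbf{k}$), whereas you rotate so that the \emph{rotation axis} becomes $e_3$ (making $R$ a standard planar rotation and leaving $t$ general). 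Your choice yields a shorter hand computation with transparent cancellations and makes explicit that the whole identity is driven by the single scalar $t_3=n^{\mathrm T}t$. The paper's choice pays off differently: its computation produces the factorization of the left-hand side as $(\tau+1-4s^2)(\tau+1-4w^2)$, which exhibits Eq.~\eqref{eq:constrEtau} as a quadratic in $\tau$ whose two roots are precisely the traces of the twisted pair of rotations, foreshadowing Proposition~\ref{prop:constrEtau2}; your argument establishes the identity for the given $R$ but does not display the second root. Either route is a complete and valid proof of the proposition as stated.
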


\begin{proof}
Let $U \in \mathrm{SO}(3)$ be such that $Ut = \begin{bmatrix}0 & 0 & 1\end{bmatrix}^{\mathrm T}$. Then,
\begin{equation}
\hat E = UEU^{\mathrm T} = U[t]_\times U^{\mathrm T} URU^{\mathrm T} = [Ut]_\times \hat R = \begin{bmatrix}0 & 1 & 0\\ -1 & 0 & 0\\ 0 & 0 & 0\end{bmatrix} \hat R,
\end{equation}
where $\hat R = URU^{\mathrm T}$. It is clear that if~$E$ satisfies Eq.~\eqref{eq:constrEtau}, then so does~$\hat E$ and vice versa. Let us represent~$\hat R$ in terms of a unit quaternion $s + u\mathbf{i} + v\mathbf{j} + w\mathbf{k}$, i.e.
\begin{equation}
\hat R = \begin{bmatrix}1 - 2v^2 - 2w^2 & 2uv - 2ws & 2uw + 2vs \\ 2uv + 2ws & 1 - 2u^2 - 2w^2 & 2vw - 2us \\ 2uw - 2vs & 2vw + 2us & 1 - 2u^2 - 2v^2\end{bmatrix},
\end{equation}
where $s^2 + u^2 + v^2 + w^2 = 1$. Then,
\begin{equation}
\hat E = \begin{bmatrix}2uv + 2ws & 1 - 2u^2 - 2w^2 & 2vw - 2us \\ -1 + 2v^2 + 2w^2 & -2uv + 2ws & -2uw - 2vs \\ 0 & 0 & 0\end{bmatrix}.
\end{equation}
Substituting this into~\eqref{eq:constrEtau}, after some computation, we get
\begin{equation}
\text{l.h.s. of~\eqref{eq:constrEtau}} = (\tau + 1 - 4s^2)(\tau + 1 - 4w^2).
\end{equation}
This completes the proof, since $\tau = \tr R = \tr \hat R = 4s^2 - 1$.
\end{proof}

It is well known~\cite{HZ,Maybank} that for a given essential matrix~$E$ there is a ``twisted pair'' of rotations~$R_a$ and~$R_b$ so that $E \sim [\pm t]_\times R_a \sim [\pm t]_\times R_b$. By Proposition~\ref{prop:constrEtau}, $\tau_a = \tr R_a$ and $\tau_b = \tr R_b$ must be roots of Eq.~\eqref{eq:constrEtau}. Since the equation is quadratic in~$\tau$ there are no other roots. Thus we have

\begin{prop}
\label{prop:constrEtau2}
Let $E$ be a real non-zero essential matrix satisfying Eq.~\eqref{eq:constrEtau} for a certain $\tau \in [-1, 3]$. Then, either $\tau = \tr R_a$ or $\tau = \tr R_b$, where~$(R_a, R_b)$ is the twisted pair of rotations for~$E$.
\end{prop}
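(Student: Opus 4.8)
The plan is to treat the left-hand side of~\eqref{eq:constrEtau} as a polynomial in the single unknown~$\tau$, with the entries of~$E$ held fixed, and to pin down its roots. Collecting powers of~$\tau$ rewrites~\eqref{eq:constrEtau} as
\begin{equation}
\tfrac{1}{2}\tr(EE^{\mathrm T})\,\tau^2 + \bigl(\tr(E^2) - \tr^2 E\bigr)\,\tau + \bigl(\tr(E^2) - \tfrac{1}{2}\tr(EE^{\mathrm T})\bigr) = 0.
\end{equation}
Since~$E$ is real and non-zero, the leading coefficient $\tfrac{1}{2}\tr(EE^{\mathrm T}) = \tfrac{1}{2}\sum_{i,j}(E)_{ij}^2$ is strictly positive, so this is a genuine quadratic and has at most two roots in~$\mathbb{C}$, counted with multiplicity.

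I would then exhibit two such roots. By the twisted-pair decomposition recalled above, $E \sim [\pm t]_\times R_a \sim [\pm t]_\times R_b$ for suitable $R_a, R_b \in \mathrm{SO}(3)$. Because every term of~\eqref{eq:constrEtau} is homogeneous of degree two in~$E$, the scale ambiguity in~$\sim$ does not affect whether the constraint holds, and the sign of~$t$ is irrelevant as it leaves the rotation unchanged. Applying Proposition~\ref{prop:constrEtau} to $E = [t]_\times R_a$ shows that $\tau_a = \tr R_a$ is a root, and applying it to $E = [t]_\times R_b$ shows the same for $\tau_b = \tr R_b$. Both values lie in $[-1,3]$, being traces of rotation matrices.

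It remains to show that no other value can satisfy the equation. If $\tau_a \neq \tau_b$, then these are the two distinct roots of the quadratic, so any $\tau \in [-1,3]$ solving~\eqref{eq:constrEtau} must equal one of them, which is the assertion. The point I expect to require the most care is the degenerate case $\tau_a = \tau_b$: knowing only that both are roots does not, by itself, exclude a second distinct root of the quadratic. To settle this I would invoke the factorization established in the proof of Proposition~\ref{prop:constrEtau}, namely that in the frame where $Ut = \begin{bmatrix} 0 & 0 & 1\end{bmatrix}^{\mathrm T}$ the left-hand side equals $(\tau + 1 - 4s^2)(\tau + 1 - 4w^2)$, where $(s,u,v,w)$ is the unit quaternion of $\hat R = URU^{\mathrm T}$. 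The first factor vanishes at $4s^2 - 1 = \tr R = \tau_a$. Passing to the twisted partner amounts to composing $\hat R$ with the half-turn about the baseline direction, whose quaternion is~$\mathbf{k}$; a short quaternion product shows the scalar part then becomes $\pm w$, so $\tr R_b = 4w^2 - 1$ and the second factor vanishes exactly at~$\tau_b$. Hence the roots of the quadratic are precisely $\tau_a$ and $\tau_b$, coalescing into a double root when they coincide, and no spurious third value arises. Every $\tau \in [-1,3]$ satisfying~\eqref{eq:constrEtau} therefore equals $\tr R_a$ or $\tr R_b$.
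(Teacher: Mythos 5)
Your proof is correct and follows the same route as the paper: the paper's entire argument is the one-paragraph observation that, by Proposition~\ref{prop:constrEtau}, $\tr R_a$ and $\tr R_b$ are roots of the equation, which is quadratic in $\tau$ and so has no other roots. You add two points of rigor that the paper leaves implicit: you check that the leading coefficient $\tfrac{1}{2}\tr(EE^{\mathrm T})$ is strictly positive, so the equation is a genuine quadratic and cannot degenerate (if it were identically zero, every $\tau$ would satisfy it); and you close the double-root case $\tr R_a = \tr R_b$ by returning to the factorization $(\tau + 1 - 4s^2)(\tau + 1 - 4w^2)$ and identifying the second factor's root with $\tr R_b = 4w^2 - 1$ via the half-turn quaternion $\mathbf{k}$ --- a computation that is correct and which shows the two roots of the quadratic are \emph{exactly} $\tr R_a$ and $\tr R_b$ counted with multiplicity. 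This is a strictly tighter version of the paper's argument rather than a different one.
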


Now suppose that we are given two cameras with unknown but identical calibration matrices~$K$ and $K' = K$. Then we have
\begin{equation}
\label{eq:FtoE}
E \sim K^{\mathrm T} F K,
\end{equation}
where $F$ is the fundamental matrix. Substituting this into Eqs.~\eqref{eq:constrE}--\eqref{eq:constrEtau}, we get the following ten equations:
\begin{align}
\label{eq:constrFw1}
\frac{1}{2}\,\tr(F\omega^*F^{\mathrm T}\omega^*)F - F\omega^*F^{\mathrm T}\omega^*F &= 0_{3\times 3},\\
\label{eq:constrFw2}
\frac{1}{2}\,(\tau^2 - 1)\tr(F\omega^*F^{\mathrm T}\omega^*) + (\tau + 1)\tr(\omega^*F\omega^*F) - \tau\tr^2(\omega^*F) &= 0,
\end{align}
where $\omega^* = KK^{\mathrm T}$. Constraints~\eqref{eq:constrFw1}--\eqref{eq:constrFw2} involve the internal parameters of a camera and hence can be used for its self-calibration. We notice that not all of these constraints are necessarily linearly independent.

\begin{prop}
\label{prop:4constr}
If the fundamental matrix~$F$ is known, then Eq.~\eqref{eq:constrFw1} gives at most three linearly independent constraints on the entries of~$\omega^*$.
\end{prop}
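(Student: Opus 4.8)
The plan is to regard the left-hand side of~\eqref{eq:constrFw1} as a single matrix-valued quadratic form
\[
N(\omega^*) = \tfrac{1}{2}\,\tr(F\omega^*F^{\mathrm T}\omega^*)\,F - F\omega^*F^{\mathrm T}\omega^* F
\]
in the entries of the symmetric matrix $\omega^*$, and to bound the dimension of the linear span of its nine scalar components $N_{ij}$. The number of linearly independent constraints is exactly this dimension, so it suffices to exhibit enough linear relations among the $N_{ij}$ whose coefficients do not depend on $\omega^*$.

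First I would factor $F$ out on the right, writing $N = C\,F$ with $C = \tfrac{1}{2}\,\tr(F\omega^*F^{\mathrm T}\omega^*)\,I - F\omega^*F^{\mathrm T}\omega^*$. Since $\det F = 0$ by Theorem~\ref{thm:fund}, there are nonzero epipoles $e$ and $e'$ with $Fe = 0$ and $F^{\mathrm T}e' = 0$. The factorisation immediately gives $N e = C(Fe) = 0$, while $e'^{\mathrm T}N = 0$ because $e'^{\mathrm T}F = 0$ annihilates both terms of $N$. Both identities hold for every $\omega^*$, so they are genuine constant-coefficient relations among the $N_{ij}$: in a basis adapted to $e$ in the domain and $e'$ in the codomain, $N$ acquires a vanishing column and a vanishing row, leaving a single nonzero $2\times 2$ block. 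Hence at most four of the nine entries are linearly independent. To make the normal-form computation legitimate I would note that the substitution $\omega^* \mapsto S\omega^* S^{\mathrm T}$, $F \mapsto S^{-\mathrm T}FS^{-1}$ sends $N$ to $S^{-\mathrm T}NS^{-1}$, an invertible linear change of both the parameter and the output, and therefore preserves the number of independent constraints; choosing $S$ so that $e$ and $e'$ become coordinate vectors reduces everything to this adapted frame.

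The crux is to extract one further relation, cutting four down to three. Here I would exploit the precise form of the scalar coefficient in $C$, namely $\tfrac{1}{2}\,\tr(H\omega^*)$ with $H := F\omega^*F^{\mathrm T}$. Since $H$ is symmetric and $He' = 0$, in the $e'$-adapted basis its last row and column vanish, so $H\omega^*$ has vanishing last row and its $(3,3)$ entry is zero; consequently $\tr(H\omega^*)$ equals the trace of the upper-left $2\times 2$ block of $H\omega^*$. Substituting into $C = \tfrac{1}{2}\,\tr(H\omega^*)\,I - H\omega^*$ then forces the upper-left $2\times 2$ block of $C$ to be trace-free. Carrying this through $N = C\,F$ produces exactly one additional constant-coefficient relation on the four surviving entries of $N$ — in effect, the nonzero $2\times 2$ block of $N$ is itself trace-free — which is independent of the two epipolar relations. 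This leaves at most three independent entries, which is the assertion.

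I expect the main obstacle to be the last step: verifying that the trace-free property of the block of $C$ really does transfer, under right-multiplication by $F$, to a nontrivial relation on the entries of $N$, and that this relation is linearly independent of $Ne=0$ and $e'^{\mathrm T}N=0$. The two epipolar relations are forced purely by $\det F = 0$ and give only the bound four; the genuinely new content — and the part that uses the specific combination $\tfrac12\tr(\cdot)I$ appearing in the cubic constraint~\eqref{eq:constrE} — is this trace identity, and checking it cleanly in the adapted coordinates is the one place where a short explicit computation seems unavoidable.
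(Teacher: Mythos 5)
Your first step---the two epipolar identities $Ne = 0$ and $e'^{\mathrm T}N = 0$ forced by $\det F = 0$---is exactly the paper's entire proof; the paper stops there and asserts that six of the nine entries are dependent. You are right to be suspicious of that count: the six scalar relations overlap (the combination $e'^{\mathrm T}(Ne) - (e'^{\mathrm T}N)e$ is a trivial dependency among them), so they span only a five-dimensional space of constant-coefficient relations and by themselves give the bound \emph{four}, not three. Your instinct that one further relation must come from the specific combination $\frac{1}{2}\tr(\cdot)\,I$ appearing in the cubic constraint is correct, and this is genuine content that the paper's own one-line proof glosses over.

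However, the final identity as you state it is not the right one. Writing $N = CF$ with $C = \frac{1}{2}\tr(H\omega^*)I - H\omega^*$ and $H = F\omega^*F^{\mathrm T}$, the relevant block of $C$ is indeed trace-free, but right-multiplication by the invertible block of $F$ does not preserve trace-freeness, so the nonzero $2\times 2$ block of $N$ is \emph{not} trace-free in general. The correct sixth relation is $\tr(NF^{+}) = 0$, where $F^{+}$ is the Moore--Penrose pseudo-inverse of $F$ (in your adapted frame this says that the block of $NF^{+}$, which is exactly the block of $C$, is trace-free). It can be checked invariantly, sidestepping the normal form: since $FF^{+} = I - e'e'^{\mathrm T}/\|e'\|^2$, one has $\tr(NF^{+}) = \tr(CFF^{+}) = \tr C - e'^{\mathrm T}Ce'/\|e'\|^2$, and both terms equal $\frac{1}{2}\tr(H\omega^*)$ because $e'^{\mathrm T}H = 0$; hence the difference vanishes identically in $\omega^*$. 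This relation is independent of the five epipolar ones because $F$ itself satisfies $Fe = 0$ and $e'^{\mathrm T}F = 0$ while $\tr(FF^{+}) = 2 \neq 0$, so the functional $N \mapsto \tr(NF^{+})$ does not vanish on the four-dimensional space cut out by the epipolar relations. That gives six independent relations and the claimed bound of three. A small additional caveat: a single substitution $\omega^* \mapsto S\omega^*S^{\mathrm T}$, $F \mapsto S^{-\mathrm T}FS^{-1}$ cannot send $e$ and $e'$ simultaneously to the same coordinate vector, so the fully block-diagonal normal form for $F$ that your computation assumes is not actually reachable this way; the invariant formulation above avoids the issue.
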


\begin{proof}
Recall that matrix~$F$ is generically of rank two. Let the right and left null vectors of~$F$ be~$e$ and~$e'$ respectively. Denote by~$G$ the l.h.s. of Eq.~\eqref{eq:constrFw1}. Then it is clear that
\begin{equation}
Ge = G^{\mathrm T}e' = 0_{3\times 1}.
\end{equation}
It follows that, given~$F$, at least six of $(G)_{ij}$ are linearly dependent. Proposition~\ref{prop:4constr} is proved.
\end{proof}

\section{Description of the Algorithm}
\label{sec:algorithm}

The initial data for our algorithm is $N \geq 7$ point correspondences $q_i \leftrightarrow q'_i$, $i = 1, \ldots, N$, and also the trace~$\tau$ of the rotation matrix~$R$.

\subsection{Data pre-normalization}
\label{ssec:normal}

To significantly improve the numerical stability of our algorithm, points~$q_i$ and~$q'_i$ are first normalized as follows, adapted from~\cite{HZ}. We construct a $3\times 3$ matrix~$S$ of the form
\begin{equation}
\label{eq:matrixS}
S = \begin{bmatrix}\gamma & 0 & \alpha\\ 0 & \gamma & \beta\\ 0 & 0 & 1\end{bmatrix}
\end{equation}
so that the $2N$ new points, represented by the columns of matrix
\begin{equation}
S\begin{bmatrix}q_1 & \ldots & q_N & q'_1 & \ldots & q'_N\end{bmatrix},
\end{equation}
satisfy the following conditions:
\begin{itemize}
\item their centroid is at the coordinate origin;
\item their average distance from the origin is~$\sqrt{2}$.
\end{itemize}

From now on we assume that~$q_i$ and~$q'_i$ are normalized.

\subsection{Polynomial equations}

The fundamental matrix~$F$ is estimated from $N \geq 7$ point correspondences in two views. The algorithm is well known, see~\cite{HZ} for details. In the minimal case $N = 7$ there are either one or three real solutions. Otherwise, the solution is generically unique.

Let both cameras be identically calibrated and have Euclidean image planes, i.e.
\begin{equation}
K = K' = \begin{bmatrix}f & 0 & a\\ 0 & f & b\\ 0 & 0 & 1\end{bmatrix},
\end{equation}
where $f$ is the focal length and $(a, b)$ is the principal point. It follows that
\begin{equation}
\omega^* = KK^{\mathrm T} = \begin{bmatrix}a^2 + p & ab & a\\ ab & b^2 + p & b\\ a & b & 1\end{bmatrix},
\end{equation}
where we introduce a new variable $p = f^2$.

Substituting $F$ and~$\omega^*$ into constraints~\eqref{eq:constrFw1} and~\eqref{eq:constrFw2}, we get ten quartic equations in variables $a, b$ and~$p$. Let~$G$ be the l.h.s. of~\eqref{eq:constrFw1}. By Proposition~\ref{prop:4constr}, up to three of~$(G)_{ij}$ are linearly independent. Let $f_1 = (G)_{11}$, $f_2 = (G)_{22}$, $f_3 = (G)_{33}$ and $f_4 = \text{l.h.s. of~\eqref{eq:constrFw2}}$. The objective is to find all feasible solutions of the following system of polynomial equations
\begin{equation}
\label{eq:sys}
f_1 = f_2 = f_3 = f_4 = 0.
\end{equation}

Let us define the ideal $J = \langle f_1, f_2, f_3, f_4 \rangle \subset \mathbb C[a, b, p]$. Unfortunately, ideal~$J$ is not zero-dimensional. There is a one-parametric family of solutions of system~\eqref{eq:sys} corresponding to the unfeasible case $p = 0$. We state without proof the decomposition
\begin{equation}
\label{eq:decomp}
\sqrt J = J' \cap \langle p, \tr(F\omega^*)\rangle,
\end{equation}
where $\sqrt J$ is the radical of~$J$ and $J' = J/\langle p \rangle$ is the quotient ideal, which is already zero-dimensional. By~\eqref{eq:decomp}, the affine variety of~$J$ is the union of a finite set of points in~$\mathbb C^3$ and a conic in the plane $p = 0$.

\subsection{Gr\"{o}bner basis}

In this subsection, the Gr\"{o}bner basis of ideal~$J'$ will be constructed. We start from rewriting equations~\eqref{eq:sys} in form
\begin{equation}
B_0 y_0 = 0,
\end{equation}
where $B_0$ is a $4\times 22$ coefficient matrix, and
\begin{multline}
y_0 =
\left[a^3b \quad a^2b^2 \quad ab^3 \quad a^2b \quad a^4 \quad b^4 \quad a^3 \quad ab^2 \quad b^3 \quad a^2p \quad abp \quad b^2p \right.\\ \left. a^2 \quad ab \quad b^2 \quad ap \quad bp \quad p^2 \quad a \quad b \quad p \quad 1\right]^{\mathrm T}
\end{multline}
is a monomial vector. Let us consider the following sequence of transformations:
\begin{equation}
\label{eq:seq}
(B_i, y_i) \to (\tilde B_i, y_i) \to (B_{i + 1}, y_{i + 1}), \qquad i = 0, \ldots, 4.
\end{equation}
Here each $\tilde B_i$ is the \emph{reduced row echelon form} of~$B_i$. The monomials in~$y_i$ are ordered so that the left of matrix~$\tilde B_i$ is an identity matrix for each~$i$.

We exploit below some properties of the intermediate polynomials in~\eqref{eq:seq}, e.g. they can be factorized or have degree lower than one expects from the corresponding monomial vector. All of these properties have been verified in Maple by using randomly generated instances of the problem over the field of rationals.

Let us denote by $(A)_i$ the $i$th row of matrix~$A$. Now we describe in detail each transformation $(\tilde B_i, y_i) \to (B_{i + 1}, y_{i + 1})$ of sequence~\eqref{eq:seq}.

\begin{itemize}
\item The last row of $\tilde B_0$ corresponds to a 3rd degree polynomial. Matrix~$B_1$ of size $7\times 32$ is obtained from~$\tilde B_0$ by appending 3 new rows and 10 new columns. The rows are: $a(\tilde B_0)_4$, $b(\tilde B_0)_4$ and $p(\tilde B_0)_4$. Monomial vector
    \begin{multline}
    y_1 = \left[a^3b \quad a^2b^2 \quad ab^3 \quad a^2b \quad \underline{a^3p} \quad \underline{a^2bp} \quad \underline{ab^2p} \quad a^4 \right.\\ \left. b^4 \quad a^3 \quad ab^2 \quad b^3 \quad a^2p \quad \underline{b^2p^2} \quad \underline{ap^2} \quad abp \quad b^2p \quad \underline{b^3p} \quad a^2 \right.\\ \left. \underline{abp^2} \quad \underline{a^2p^2} \quad ab \quad b^2 \quad \underline{bp^2} \quad \underline{p^3} \quad ap \quad bp \quad p^2 \quad a \quad b \quad p \quad 1\right]^{\mathrm T},
    \end{multline}
    where we underlined the new monomials (columns) of matrix~$B_1$.

\item The polynomials corresponding to the last three rows of~$\tilde B_1$ are divisible by~$p$. Matrix~$B_2$ of size $13\times 32$ is obtained as follows. We append~6 new rows to~$\tilde B_1$, which are $(\tilde B_1)_i/p$, $a(\tilde B_1)_i/p$ and $b(\tilde B_1)_i/p$ for $i = 6, 7$. Monomial vector $y_2 = y_1$.

\item Matrix~$B_3$ of size $19\times 32$ is obtained from~$\tilde B_2$ by appending 6 new rows: $a(\tilde B_2)_i$, $b(\tilde B_2)_i$ and $p(\tilde B_2)_i$ for $i = 12, 13$. Monomial vector $y_3 = y_1$.

\item The last row of~$\tilde B_3$ corresponds to a 2nd degree polynomial. Thus we proceed with the polynomials of degree up to~$3$. We eliminate from~$\tilde B_3$ rows and columns corresponding to all 4th degree polynomials and monomials respectively. Matrix~$B_4$ of size $11\times 20$ is obtained from~$\tilde B_3$ as follows. We hold the rows of~$\tilde B_3$ with numbers 4, 10, 11, 12, 13, 16, 17, 19, and append 3 new rows: $a(\tilde B_3)_{19}$, $b(\tilde B_3)_{19}$ and $p(\tilde B_3)_{19}$. Monomial vector
    \begin{multline}
    y_4 = \left[a^2b \quad a^3 \quad ab^2 \quad b^3 \quad a^2p \quad ap^2 \quad abp \quad b^2p \right.\\ \left. a^2 \quad ab \quad b^2 \quad bp^2 \quad p^3 \quad ap \quad bp \quad p^2 \quad a \quad b \quad p \quad 1\right]^{\mathrm T}.
    \end{multline}

\item Finally, matrix~$B_5$ of size $14\times 20$ is obtained from $\tilde B_4$ by appending 3 new rows: $a(\tilde B_4)_{11}$, $b(\tilde B_4)_{11}$ and $p(\tilde B_4)_{11}$. Monomial vector $y_5 = y_4$.
\end{itemize}

The last six rows of matrix $\tilde B_5$ constitute the (reduced) Gr\"{o}bner basis of ideal~$J'$ w.r.t. the graded reverse lexicographic order with $a > b > p$.

\subsection{Internal and external parameters}

Given the Gr\"{o}bner basis of~$J'$, the $6\times 6$ action matrix~$M_p$ for multiplication by~$p$ in the quotient ring $\mathbb C[a, b, p]/J'$ can be easily constructed as follows. We denote by~$C$ the $6\times 6$ right lower submatrix of~$\tilde B_5$. Then the first three rows of~$M_p$ are the last three rows of $(-C)$. The rest of~$M_p$ consists of almost all zeros except
\begin{equation}
(M_p)_{41} = (M_p)_{52} = (M_p)_{65} = 1.
\end{equation}

The six solutions are then found from the eigenvectors of matrix~$M_p$, see~\cite{CLS} for details. Complex solutions and the ones with $p < 0$ are excluded.

Having found the calibration matrix~$K$, we compute the essential matrix~$E$ from formula~\eqref{eq:FtoE} and then the externals~$R$ and~$t$ using the standard procedure, see e.g.~\cite{Nister}. Note that, due to Proposition~\ref{prop:constrEtau2} and the \emph{cheirality constraint}~\cite{HZ,Nister}, the trace of the estimated matrix~$R$ must equal~$\tau$.

Finally, the denormalized entities (see subsection~\ref{ssec:normal} and the definition of matrix~$S$) are found as follows:
\begin{itemize}
\item fundamental matrix is $S^{\mathrm T}FS$;
\item calibration matrix is $S^{-1}K$;
\item essential matrix is unchanged, as $K^{\mathrm T}S^{-\mathrm T}S^{\mathrm T}FSS^{-1}K = K^{\mathrm T}FK \sim E$;
\item externals~$R$ and~$t$ are also unchanged.
\end{itemize}

\section{Experiments on Synthetic Data}
\label{sec:synth}

In this section we test the algorithm on synthetic image data. The default data setup is given in the following table:
\begin{center}
\smallskip\begin{tabular}{|c|c|}
\hline
Distance to the scene & 1\\\hline
Scene depth & 0.5\\\hline
Baseline length & 0.1\\\hline
Image dimensions & $1280 \times 720$ \\\hline
\end{tabular}\smallskip
\end{center}

\subsection{Numerical accuracy}

First we verify the numerical accuracy of our method. The numerical error is defined as the minimal relative error in the calibration matrix, that is
\begin{equation}
\min\limits_i\left(\frac{\|K_i - K_\mathrm{gt}\|}{\|K_\mathrm{gt}\|} \right).
\end{equation}
Here $\|\cdot\|$ stands for the Frobenius norm, $i$ counts all real solutions, and
\begin{equation}
K_\mathrm{gt} = \begin{bmatrix}1000 & 0 & 640 \\ 0 & 1000 & 360 \\ 0 & 0 & 1 \end{bmatrix}
\end{equation}
is the ground truth calibration matrix. The numerical error distribution is reported in Fig.~\ref{fig:nerror}.

\begin{figure}
\centering
\includegraphics[width=0.5\hsize]{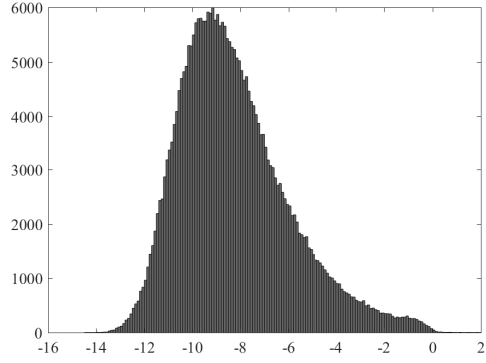}
\caption{$\log_{10}$ of numerical error for noise free data. The median error is $2.5\times 10^{-9}$}
\label{fig:nerror}
\end{figure}

\subsection{Number of solutions}
\label{ssec:nsols}

\begin{figure}
\centering
\includegraphics[width=1.0\hsize]{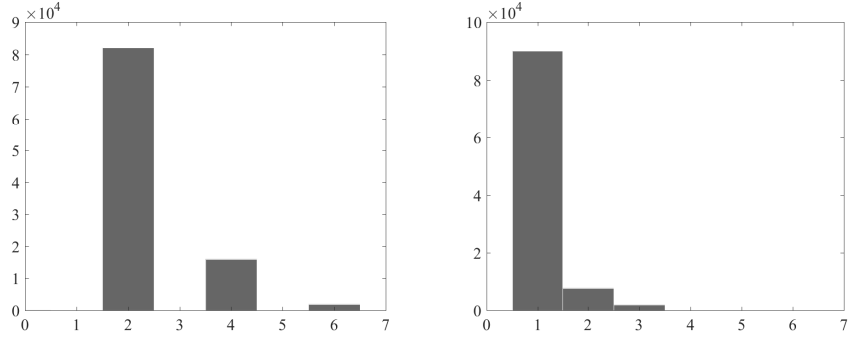}
\caption{The number of real (left) and feasible (right) solutions for the internal calibration. Noise free data}
\label{fig:nsols}
\end{figure}

In general, the algorithm outputs six solutions for the calibration matrix, counting both real and complex ones. The number of real solutions is usually two or four. The number of real solutions with positive~$p$ (we call such solutions feasible) is equal to one in most cases. The corresponding distributions are demonstrated in Fig.~\ref{fig:nsols}.

\subsection{Behaviour under noise}

To evaluate the robustness of our solver, we add two types of noise to the initial data. First, the image noise which is modelled as zero-mean, Gaussian distributed with a standard deviation varying from 0 to 1 pixel in a $1280 \times 720$ image. Second, the angle noise resulting from inaccurately found relative rotation angle~$\theta$. In practice, $\theta$ is computed by integrating the angular velocity measurements obtained from a 3d gyroscope. Therefore, a realistic model for the angle noise is quite complicated and depends on a number of factors such as the value of~$\theta$, the measurement rates, the inertial sensor noise model, etc. In a very simplified manner, the angle noise can be modelled as $\theta s$~\cite{LHLP}, where~$s$ has the Gaussian distribution with zero mean and standard deviation~$\sigma$. In our experiments~$\sigma$ ranges from~$0$ to~$0.09$.

\begin{figure}
\centering
\includegraphics[width=1.0\hsize]{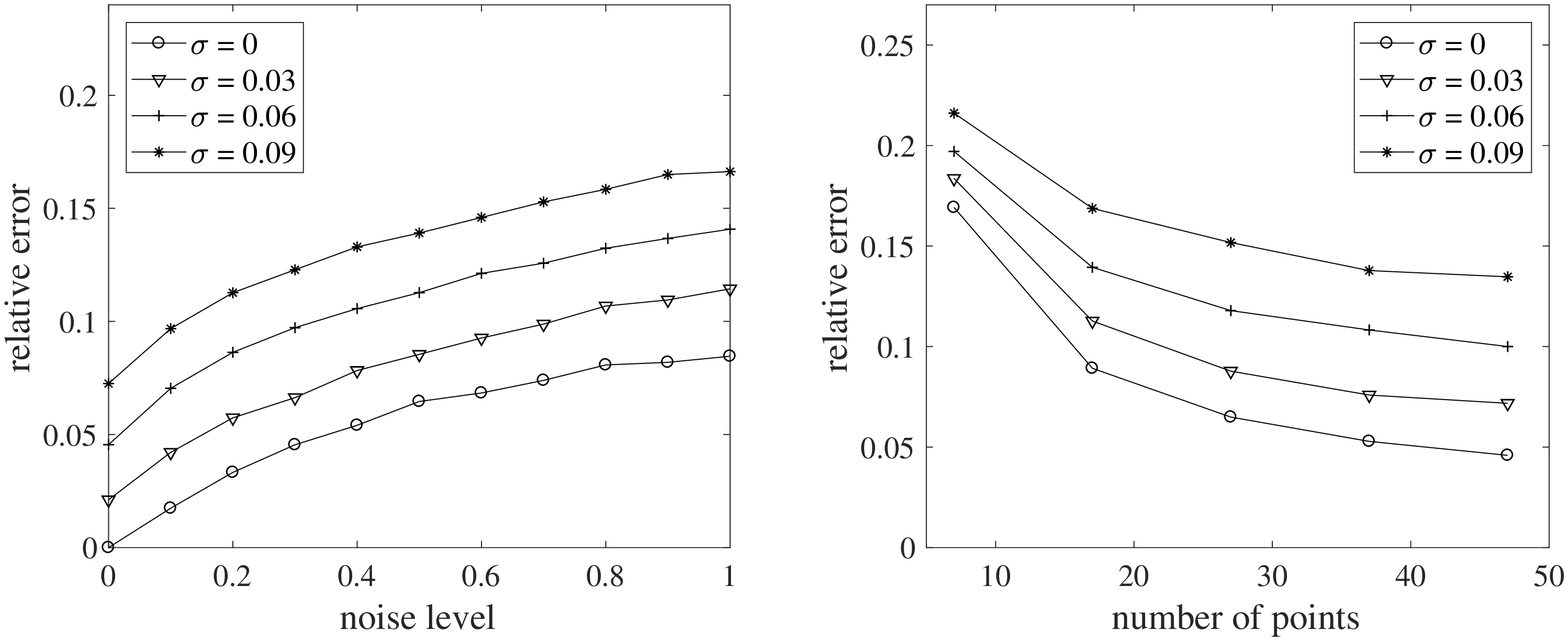}
\caption{The relative error in the calibration matrix at varying level of noise (left) and at varying number of points (right). The number of image points on the left figure is $N = 20$. The standard deviation for the image noise on the right figure is 1~pixel}
\label{fig:K}
\end{figure}

Fig.~\ref{fig:K} demonstrates the behaviour of the algorithm under increasing the image and angle noise. Here and below each point on the diagram is a median of $10^4$ trials.

\subsection{Comparison with the existing solvers}

We compare our algorithm for the minimal number of points ($N = 7$) with the 6-point solver from~\cite{SNKS} and the 5-point solver from~\cite{SEN06}.

Fig.~\ref{fig:f} depicts the relative focal length error $|f - 1000|/1000$ at varying levels of image noise. Here and below $\alpha = 0.1$ means a $10\%$-miscalibration in the parameters~$a$ and~$b$, i.e. the data was generated using the ground truth calibration matrix~$K_\mathrm{gt}$, whereas the solutions were found assuming that
\begin{equation}
K = \begin{bmatrix}1000 & 0 & (1 + \alpha)640 \\ 0 & 1000 & (1 + \alpha)360 \\ 0 & 0 & 1 \end{bmatrix}.
\end{equation}

\begin{figure}
\centering
\includegraphics[width=0.5\hsize]{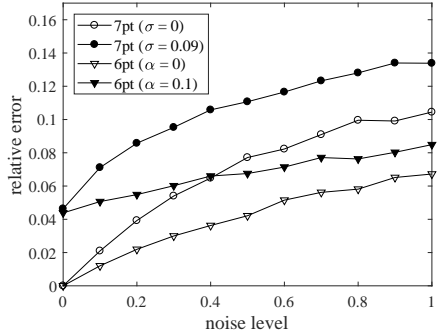}
\caption{The relative focal length error against noise standard deviation in pixels}
\label{fig:f}
\end{figure}

\begin{figure}
\centering
\includegraphics[width=1.0\hsize]{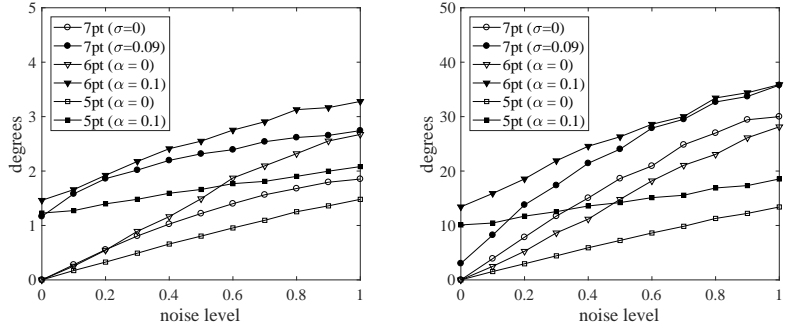}
\caption{The rotational (left) and translational (right) errors in degrees against noise standard deviation in pixels}
\label{fig:RT}
\end{figure}

In Fig.~\ref{fig:RT}, the rotational and translational errors of the algorithms are reported. As it can be seen, for realistic levels of image noise, the 5-point algorithm expectedly outperforms as our solution as the 6-point solver. However, it is worth emphasizing that our solution is more suitable for the internal self-calibration of a camera rather than for its pose estimation. Once the self-calibration is done, it is more efficient to switch to the 5- or even 4-point solvers from~\cite{LHLP,Nister}.

We also compared the speed of the algorithms. The average running times over $10^5$ trials are $2.0$~ms (our 7pt), $1.6$~ms (6pt) and $0.4$~ms (5pt) on a system with 2.3~GHz processor. The most expensive step of our algorithm is the computation of the five reduced row echelon forms in sequence~\eqref{eq:seq}.

\section{Experiments on Real Data}
\label{sec:real}

In this section we validate the algorithm by using the publicly available EuRoC dataset~\cite{EuRoC}. This dataset contains sequences of the data recorded from an IMU and two cameras on board a micro-aerial vehicle and also a ground truth. Specifically, we used the ``Machine Hall~01'' dataset (easy conditions) and only the images taken by camera~``0''. The image size is $752 \times 480$ (WVGA) and the ground truth calibration matrix is
\begin{equation}
K_\mathrm{gt} = \begin{bmatrix}458.654 & 0 & 367.215 \\ 0 & 457.296 & 248.375 \\ 0 & 0 & 1 \end{bmatrix}.
\end{equation}

\begin{figure}
\centering
\includegraphics[width=1.0\hsize]{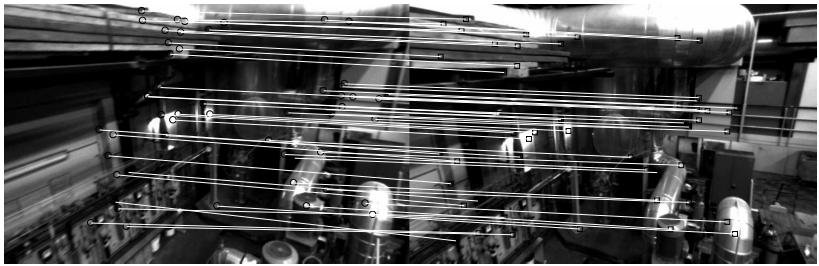}
\caption{The pair of undistorted images with the time stamps ``1403636646263555584'' and ``1403636646613555456'' from the EuRoC dataset and the matched points}
\label{fig:images}
\end{figure}

The sequence of 68 image pairs was derived from the dataset and the algorithm was applied to every image pair, see example in Fig.~\ref{fig:images}. Here it is necessary to make a few remarks.
\begin{itemize}
\item Since the algorithm assumes the pinhole camera model, every image was first undistorted using the ground truth parameters.

\item As it was mentioned in Subsection~\ref{ssec:nsols}, the feasible solution is almost always unique. However in rare cases multiple solutions are possible. To reduce the probability of getting such solutions we additionally assumed that the principal point is sufficiently close to the geometric image center. More precisely, the solutions with
\begin{equation}
(a, b) \not\in \{|x - 376| < 50, |y - 240| < 50\}
\end{equation}
were marked as unfeasible and hence discarded. This condition almost guarantees that the algorithm outputs at most one solution.

\item Given the readings of a triple-axis gyroscope the relative rotation angle can be computed as follows. The gyroscope reading at time~$\xi_i$ is an angular rate 3-vector~$w_i$. Let $\Delta \xi_i = \xi_i - \xi_{i - 1}$, where $i = 1, \ldots, n$. Then the relative rotation matrix~$R_n$ between the $0$th and $n$th frames is approximately found from the recursion
    \begin{equation}
    R_i = \exp([w_i]_\times \Delta \xi_i) R_{i - 1},
    \end{equation}
    where $R_0 = I$ and the matrix exponential is computed by the Rodrigues formula
    \begin{equation}
    \exp([v]_\times) = I + \frac{\sin\|v\|}{\|v\|}\, [v]_\times + \frac{1 - \cos\|v\|}{\|v\|^2}\, [v]^2_\times.
    \end{equation}
    The relative rotation angle is then derived from $\tr R_n$.

\item The image pairs with the relative rotation angle less than~$5$ degrees were discarded, since the motion in this case is close to a pure translation and self-calibration becomes unstable.
\end{itemize}

The estimated internal parameters for each image pair are shown in Fig.~\ref{fig:results}. The calibration matrix averaged over the entire sequence is given by
\begin{equation}
K = \begin{bmatrix}457.574 & 0 & 366.040 \\ 0 & 457.574 & 243.616 \\ 0 & 0 & 1 \end{bmatrix}.
\end{equation}
Hence the relative error in the calibration matrix is about~$0.6~\%$.

\begin{figure}
\centering
\includegraphics[width=1.0\hsize]{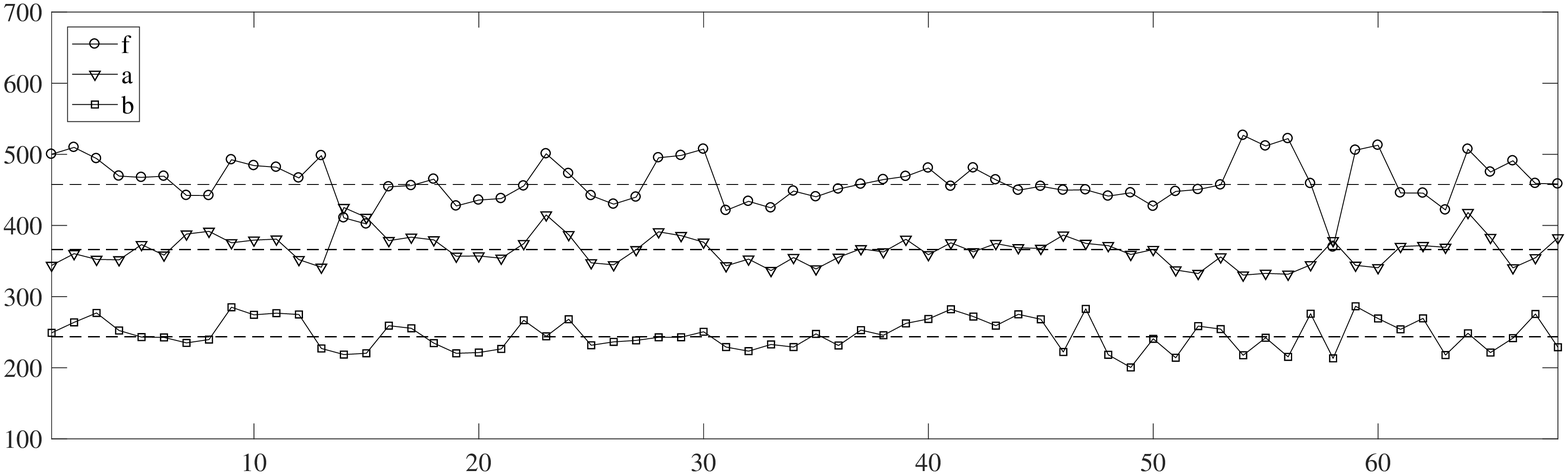}
\caption{The estimated focal length and principal point coordinates for the 68 image pairs. The dashed horizontal lines represent the average values}
\label{fig:results}
\end{figure}

\section{Conclusions}
\label{sec:disc}

We have presented a new practical solution to the problem of self-calibration of a camera with Euclidean image plane. The solution operates with at least seven point correspondences in two views and also with the known value of the relative rotation angle.

Our method is based on a novel quadratic constraint on the essential matrix, see Eq.~\eqref{eq:constrEtau}. We expect that there could be other applications of that constraint. In particular, it can be used to obtain an alternative solution of the problem from paper~\cite{LHLP}. The investigation of that possibility is left for further work.

We have validated the solution in a series of experiments on synthetic and real data. Under the assumption of generic camera motion and points configuration, it is shown that the algorithm is numerically stable and demonstrates a good performance in the presence of noise. It is also shown that the algorithm is fast enough and in most cases it produces a unique feasible solution for camera calibration.

\bibliographystyle{amsplain}
\bibliography{self2v-gyro}

\end{document}